\newtheorem{theorem}{Theorem}
\newcommand{\dingcheck}{\ding{52}}   % Check mark (✓)
\newcommand{\dingcross}{\ding{56}}     % Cross mark (✗)
\title{Training-Free ANN-to-SNN Conversion for High-Performance \\ Spiking Transformers}
\author{
    Jingya~Wang\textsuperscript{\rm 1}\equalcontrib,
    Xin~Deng\textsuperscript{\rm 1}\equalcontrib,
    Wenjie~Wei\textsuperscript{\rm 1}, 
    Dehao~Zhang\textsuperscript{\rm 1},
    Shuai~Wang\textsuperscript{\rm 1},
    Qian~Sun\textsuperscript{\rm 1},
    Jieyuan~Zhang\textsuperscript{\rm 1},
    Hanwen~Liu\textsuperscript{\rm 1},
    Ning~Xie\textsuperscript{\rm 1},
    Malu~Zhang\textsuperscript{\rm 1,\rm 2 \equalcorr} 
}
\title{My Publication Title --- Single Author}
\author {
    Author Name
}
\title{My Publication Title --- Multiple Authors}
\author {
    % Authors
    First Author Name\textsuperscript{\rm 1,\rm 2},
    Second Author Name\textsuperscript{\rm 2},
    Third Author Name\textsuperscript{\rm 1}
}
\begin{document}
\maketitle

\begin{abstract}
Leveraging the event-driven paradigm, Spiking Neural Networks (SNNs) offer a promising approach 
 for energy-efficient Transformer architectures.
While ANN-to-SNN conversion avoids the high training cost of directly trained Spiking Transformers, existing approaches still struggle to handle the nonlinear operations within Transformer blocks, and often require additional fine-tuning of pretrained ANNs.
To address these limitations, we propose a training-free and high-performance ANN-to-SNN conversion framework tailored for Transformer architectures. 
Specifically, we introduce a Multi-basis Exponential Decay (MBE) neuron that combines exponential decay with a multi-basis encoding strategy to effectively approximate nonlinear operations, eliminating the need for weight modifications in pretrained ANNs.
Extensive experiments across diverse tasks (CV, NLU, NLG) and mainstream Transformer architectures (ViT, RoBERTa, GPT-2) demonstrate that our method achieves near-lossless conversion accuracy with significantly lower latency. 
This provides a promising pathway for the efficient and scalable deployment of Spiking Transformers in real-world applications.
\end{abstract}

\section{Introduction}
Spiking Neural Networks (SNNs) have garnered attention due to their sparse and spike-driven computing paradigm~\cite{maass1997networks}. 
Unlike Artificial Neural Networks (ANNs), SNNs employ sparse binary spikes as information carriers, thereby offering superior energy efficiency for resource-limited devices~\cite{zhang2025toward, liang2025towards, wei2025qp}. Recently, several studies~\cite{wang2025spiking, xiao2025rethinking} have integrated high-performance Transformer architectures~\cite{vaswani2017attention} into SNNs to jointly exploit their expressive capacity and inherent energy efficiency. These methods substantially improve the performance of SNNs on complex tasks~\cite{wang2025ternary, wang2025snn, shan2025sdtrack, cai2025zooming}.

To obtain Transformer-based SNNs, two mainstream approaches exist: Direct Training (DT)~\cite{wu2018spatio} and ANN-to-SNN conversion (A2S) ~\cite{diehl2015fast}. 
DT employs surrogate gradients~\cite{neftci2019surrogate, wei2024q,sun2025temporal, zhang2025dendritic} and backpropagation through time (BPTT)~\cite{hochreiter1997long}, making the training of large-scale SNNs feasible. However, it suffers from inaccurate gradient approximation and \(\mathcal{T} \times\) training overhead. 
In contrast, A2S establishes an equivalence between the firing rate of Integrate-and-Fire neurons and ReLU functions~\cite{bu2023optimal}, enabling SNNs to inherit the pretrained weights of ANNs with nearly lossless conversion in CNNs. 
However, complex nonlinear operations in Transformers pose substantial challenges for converting them into SNN architectures.

To address this issue, prior research~\cite{wang2023masked} examines nonlinear components in Transformers. According to their computational forms, these nonlinear operations can be categorized into single-variable (e.g., GELU, Tanh) and multi-variable operations (e.g., variable–variable floating-point multiplications, LayerNorm). SpikeZIP-TF~\cite{you2024spikezip} replaces single-variable functions and fine-tunes pre-trained ANN weights, incurring additional training overhead. STA~\cite{jiang2024spatio} approximates all nonlinear functions using group operators, which increases inference latency. 
Thus, there is a need for a conversion method that balances training cost with inference efficiency.

Inspired by the temporal coding mechanisms observed in biological neurons, Few-spikes (FS) neurons~\cite{stockl2021optimized} are proposed. It leverages spike timing and patterns to represent neuronal activation states. This characteristic enables it to achieve near-lossless performance in small-scale CNN-based conversion~\cite{mao2025fsnap}. However, as model scale increases, FS-based conversion methods exhibit substantial performance gaps. Moreover, their single-input fitting design limits their ability to handle the multi-variable operations in Transformer architectures.

In this paper, we first theoretically and empirically analyze the key challenges associated with FS–based conversion, particularly in large-scale network architectures. To overcome these challenges, we propose a training-free A2S conversion framework tailored for Transformer architectures. Specifically, we introduce a novel Multi-basis Exponential Decay (MBE) neuron that effectively approximates diverse nonlinear operations, enabling our conversion framework to achieve both low inference latency and near-lossless accuracy. The main contributions are as follows:

\begin{itemize}
    \item 
    We systematically analyze the incompatibility between FS neurons and Transformer architectures, focusing on excessive dependence on initialization (EDI) and global suboptimality (GSO) problems, which hinder convergence and degrade conversion performance.

    \item
    We introduce a MBE neuron with exponential decay strategy and multi-basis encoding method. The decay strategy enables multi-resolution representations, while multi-basis encoding enhances the near-lossless approximation of diverse nonlinear operations.

    \item
    We propose an A2S framework based on MBE neurons that efficiently approximates various nonlinear operations in Transformer architectures, including variable-variable FP multiplications, GELU, Softmax, and LayerNorm. It achieves near-lossless conversion with fast inference and require no training on source ANNs.  

    \item
    Extensive experiments on diverse tasks (CV, NLU, NLG) and architectures (ViT, RoBERTa, GPT-2) demonstrate that our method achieves near-lossless conversion with reduced latency, yielding competitive results among existing Transformer-based A2S methods.
    
\end{itemize}

\section{Related Work}

Existing conversion paradigms can be categorized into two types based on whether extra training on the source ANNs is required: training-dependent and training-free.

\textit{\textbf{Training-dependent conversion}} typically requires additional training of ANN based on theoretical ANN-SNN equivalence. 
Clip-Floor-Shift activations~\cite{bu2023optimal} replace ReLU to better match spiking neuron behavior.
A two-phase training~\cite{ding2021optimal} first optimizes ANN weights, then adjusts neuron thresholds, while activation range constraints and membrane potential initialization~\cite{bu2022optimized} help further reduce conversion error.
\cite{wang2022towards} introduces a two-stage approach to handle quantization, pruning, and residual errors. 
 \cite{jiang2023unified} proposes a unified framework treating spike-rate mapping as a differentiable problem.
For Transformer architectures, a QANN is integrated with the spatiotemporal properties of spiking neurons for lossless conversion \cite{you2024spikezip}. 
While effective, these methods introduce computational overhead through additional training of source ANNs.

\textit{\textbf{Training-free conversion}} directly transforms pre-trained ANNs via structure and parameter reuse without fine-tuning. Early works align SNN thresholds with ANN activations via threshold balancing~\cite{diehl2015fast,rueckauer2017conversion}, but longer timesteps are required.
Signed spiking neurons~\cite{wang2022signed} support dynamic vision data.
In~\cite{jiang2024spatio}, training-free Transformer conversion is achieved using universal group operators and spatial rectification self-attention. 
SpikedAttention~\cite{hwang2024spikedattention} enables spike-driven Transformer A2S via trace-based matrix multiplication and winner-take-all spike shifting, yet retains non-spiking LayerNorm.
ECMT~\cite{huang2024towardsecmt} reduces latency in Transformer SNNs, yet still relies on floating-point operations.
While retraining is avoided, these methods often require long timesteps or retain non-spiking components, limiting SNNs' energy efficiency.

\section{Preliminary \& Problem Analysis}
\subsection{Preliminary}
Inspired by authentic electrophysiological characteristics of human brain neurons, FS neuron  is proposed to address the inherent trade-off between spike count and accuracy in A2S \cite{stockl2021optimized}. By introducing learnable parameters including neuron threshold, membrane potential reset value, and spike intensity, FS neuron approximates ANN activation functions with very few spikes. Mathematically, the membrane potential of FS neuron can be computed as:
\begin{equation}\label{eq1} 
u[t+1] = u[t] - r[t]\cdot s[t],
\end{equation}
where $t\in [0,T-1]$ denotes timesteps, $u[t]$ represents the membrane potential, $s[t]\in\{0,1\}$ is the binary spike, and $r[t]$ is the learned reset value.
The membrane potential has no decay, and its initial value is typically set to the input from the previous layer, i.e., $u[0]=x$. When the membrane potential exceeds a learned threshold, a spike is generated, which is described as follows:
\begin{equation}
s[t] = \mathcal{H}\left(\left(x - \sum\nolimits_{t^{'}=0}^{t-1} r[t^{'}]\cdot s[t^{'}]\right) - V_{th}[t]\right), 
\end{equation}
where $\mathcal{H} (\cdot) $ is the Heaviside step function and $V_{th}[t]$ is the learned firing threshold. 
After each spike emission, the reset mechanism is invoked to update the membrane potential, as described in Eq.(\ref{eq1}). 
Noteworthy, the spike $s[t]$ emitted by FS neuron at time $t$ is amplified by learned spike intensity $d[t]$. After $T$ timesteps, the neuron integrates the weighted spike train and forwards it to the next layer:
\begin{equation}\label{eq3}
\hat{f}(x) = \sum\nolimits_{t=0}^{T-1} d[t]s[t] \approx f{(x)},
\end{equation}
where $\hat{f}(x)$ is the approximation value produced by FS neuron and $f(x)$ is the output of activation function in ANNs.

\subsection{Problem Analysis}
\label{section:3.2}
We conduct a systematic analysis of FS neurons to understand their limitations in Transformer-based A2S tasks. 

\begin{figure}[h]
  \centering
  \includegraphics[width=0.95\linewidth]{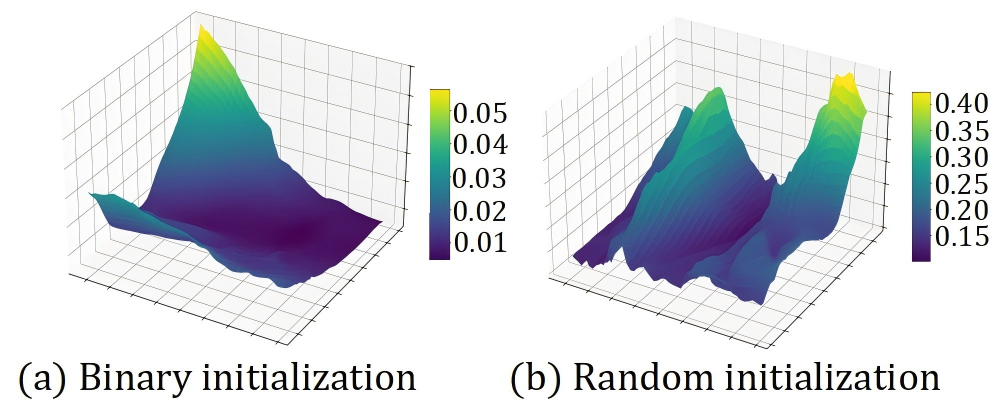}
  \caption{Loss landscape of FS neurons in approximating the nonlinear activation function(i.e., ReLU). The 3D landscape of $\mathcal{L}_{Binary}$ and $\mathcal{L}_{Random}$ from two different initialization. }
  \label{fig:losslandspace}
  \vspace{-10pt}
\end{figure}

\paragraph{\textbf{Approximation Error Analysis}} 
To qualitatively analyze how the FS neuron design impacts performance, following~\cite{jiang2024spatio}, we consider errors from three sources: insufficient sampling, limited parameterization, and spiking quantization. This yields the following error bound:

\begin{theorem}[FS Neuron Error Bound]
Let \( f: [a, b] \rightarrow \mathbb{R} \) be a target activation function, and let \( \hat{f}_T^{(M)}(x) \) denote the output of an FS neuron with \( T \) timesteps trained on \( M \) samples. Then the total approximation error satisfies: 
\[
\varepsilon\leq \mathcal{O}\Biggl( 
\underbrace{ \sqrt{ \frac{ T \log T \log M }{ M } } }_{\text{Empirical Gap}} +
\underbrace{ \frac{ \mathcal{L}_f |y|_{\max} }{ T } }_{\text{Parametric Gap}} +\!
\underbrace{ \frac{ \| d \|_1 }{ T } }_{\text{Quantization Gap}} 
\Biggr),
\]
where \( \varepsilon = \mathbb{E}[|f(x) - \hat{f}_T^{(M)}(x)|] \),  
\( \mathcal{L}_f \) is the Lipschitz constant of \( f \),  
\( |y|_{\max} \) the maximum value of \( f(x) \), and  
\( \| d \|_1 \) is the \( L_1 \)-norm of spike intensities. Proof in Appendix~A. 
\label{theorem 1}
\end{theorem}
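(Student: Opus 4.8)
The plan is to prove the bound by decomposing the error into the three advertised sources and controlling each with a standard tool: a uniform-convergence argument for the Empirical Gap, step-function approximation theory for the Parametric Gap, and a greedy-coding residual estimate for the Quantization Gap.

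First I would interpose two idealized intermediaries between the target $f$ and the trained neuron $\hat f_T^{(M)}$. Let $g_T^\star$ be the best approximant of $f$ drawn from the ``relaxed'' class obtained by letting the FS read-out $x\mapsto\sum_{t}d[t]s[t]$ use real-valued rather than binary $s[t]$ (so $g_T^\star$ is a monotone step function with $O(T)$ breakpoints), and let $\hat f_T^\star$ be the genuine binary-spike FS neuron with $T$ timesteps that minimizes the population risk $\mathbb{E}|f(x)-\hat f_T(x)|$. The triangle inequality gives
\[
\varepsilon\ \le\ \mathbb{E}\big|f(x)-g_T^\star(x)\big|\ +\ \mathbb{E}\big|g_T^\star(x)-\hat f_T^\star(x)\big|\ +\ \mathbb{E}\big|\hat f_T^\star(x)-\hat f_T^{(M)}(x)\big|,
\]
and it suffices to bound these three terms by the Parametric, Quantization, and Empirical Gaps respectively.

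For the empirical term I would use that $\hat f_T^{(M)}$ minimizes empirical risk while $\hat f_T^\star$ minimizes population risk, so $\mathbb{E}|\hat f_T^\star-\hat f_T^{(M)}|$ is at most twice the uniform deviation $\sup_{\hat f}\big|\mathbb{E}|f-\hat f|-\tfrac1M\sum_{i=1}^{M}|f(x_i)-\hat f(x_i)|\big|$ over the FS class. That class is parameterized by the $3T$ values $\{V_{th}[t],r[t],d[t]\}$ and is assembled from $T$ nested threshold comparisons, so its $L_\infty$ covering number at scale $\delta$ has logarithm $O(T\log(T/\delta))$; plugging this into the standard symmetrization-plus-chaining bound for the bounded $|{\cdot}|$ loss and choosing $\delta\asymp 1/M$ produces $O\!\big(\sqrt{T\log T\,\log M/M}\big)$. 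For the parametric term I would quantize the output range of $f$, whose oscillation on $[a,b]$ is at most $2|y|_{\max}$, into $T$ equal levels; assigning to each preimage its level value yields an approximant with sup-error $O(|y|_{\max}/T)$, and the $\mathcal{L}_f$-Lipschitz property guarantees that the induced input partition has only $O(T)$ cells, so this approximant is admissible for $g_T^\star$ and $\mathbb{E}|f-g_T^\star|=O(\mathcal{L}_f|y|_{\max}/T)$. For the quantization term I would read the FS firing rule as greedy successive approximation with the increments $d[0],\dots,d[T-1]$: firing whenever the running residual exceeds the threshold leaves a terminal residual no larger than the largest increment not yet expended, which under the canonical balanced FS schedule is $O(\|d\|_1/T)$; averaging over $x$ gives $\mathbb{E}|g_T^\star-\hat f_T^\star|=O(\|d\|_1/T)$. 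Summing the three estimates and absorbing constants into $\mathcal{O}(\cdot)$ finishes the argument.

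The step I expect to be the main obstacle is the Empirical Gap: obtaining exactly the $\sqrt{T\log T\,\log M/M}$ rate requires a complexity estimate for the FS class that simultaneously tracks the $T$ threshold locations and the scale of the output weights, together with a chaining/union-bound step that yields a single clean $\log M$ factor rather than a cruder power. A lesser subtlety is the appearance of $|y|_{\max}$ (rather than $b-a$) in the Parametric Gap, which hinges on the monotone structure of the FS read-out so that quantizing the output range can be pulled back to a legitimate breakpoint partition; for non-monotone targets one additionally needs $f$ to be piecewise monotone with boundedly many pieces, which holds for the activations of interest such as GELU, Tanh, and the coordinatewise components of Softmax. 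The Quantization Gap is essentially routine once the greedy-coding viewpoint is adopted.
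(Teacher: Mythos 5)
Your plan is sound and lands on the same three-term decomposition, but each gap is handled by a genuinely different tool than the paper uses. For the Empirical Gap, the paper does not use covering numbers and chaining: it invokes the VC-dimension lemma of Bartlett et al.\ for piecewise-linear networks with $W=3T$ weights and $L=1$ layer to get $d_{VC}=\mathcal{O}(T\log T)$, then plugs this into the standard Vapnik generalization bound; your covering-number route reaches the same $\sqrt{T\log T\log M/M}$ rate and is no less rigorous, it just replaces one off-the-shelf complexity estimate by another. For the Parametric Gap, the paper partitions the \emph{input} interval via classical piecewise-constant approximation (DeVore--Lorentz), getting $\mathcal{O}(\mathcal{L}_f(b-a)/T)$ and then converting $(b-a)$ into $|y|_{\max}$ by an amplitude argument, whereas you quantize the \emph{output} range into $T$ levels; your version needs the piecewise-monotonicity caveat you already flag, and note that it produces $\mathcal{O}(|y|_{\max}/T)$ without the $\mathcal{L}_f$ factor, which matches the stated bound only under the (harmless for GELU/Tanh, but worth stating) normalization $\mathcal{L}_f=\Omega(1)$; the paper's route keeps $\mathcal{L}_f$ explicitly, at the cost of a hand-wavier amplitude step. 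For the Quantization Gap, the paper compares an idealized continuous-time output against the discrete-time one and asserts a per-step error $|d[t]|/T$, while your greedy successive-approximation view bounds the terminal residual by the smallest remaining increment and needs the balanced (non-increasing) schedule to convert that into $\|d\|_1/T$; your argument is arguably the more transparent of the two, but make the schedule assumption explicit since for arbitrary $d[t]$ the greedy residual is controlled by $\max_t|d[t]|$, not automatically by the average. Overall the proposal is a valid alternative proof sketch at the same level of rigor as the paper's own appendix, with the $\mathcal{L}_f$ discrepancy in the middle term being the only point you should reconcile with the theorem as stated.
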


\paragraph{\textbf{Excessive Dependence on Initialization}} 

From Theorem~\ref{theorem 1}, the quantization gap $\frac{\|d\|_1}{T}$ depends on spike intensities $d[t]$, which are determined through optimization from initialization. Poor initialization may produce suboptimal $d[t]$ that increase $\|d\|_1$, thereby enlarging the approximation error. We conduct experiments to validate this: random initialization yields $\text{MSE}_\text{random} = 1.5 \times 10^{-3}$, while binary initialization ($V_{\mathrm{th}}[t] = r[t] = d[t] = 2^{T - t}$) achieves $\text{MSE}_\text{binary} = 9.4 \times 10^{-5}$, representing a 93.29\% performance degradation under random initialization. As shown in Fig. \ref{fig:losslandspace}, binary initialization  results in a smoother and more stable optimization landscape, while random initialization leads to multiple local minima. This further corroborates the sensitivity of FS neurons to initialization quality.

 \begin{figure}[h]
    \centering
    \includegraphics[width=\linewidth]{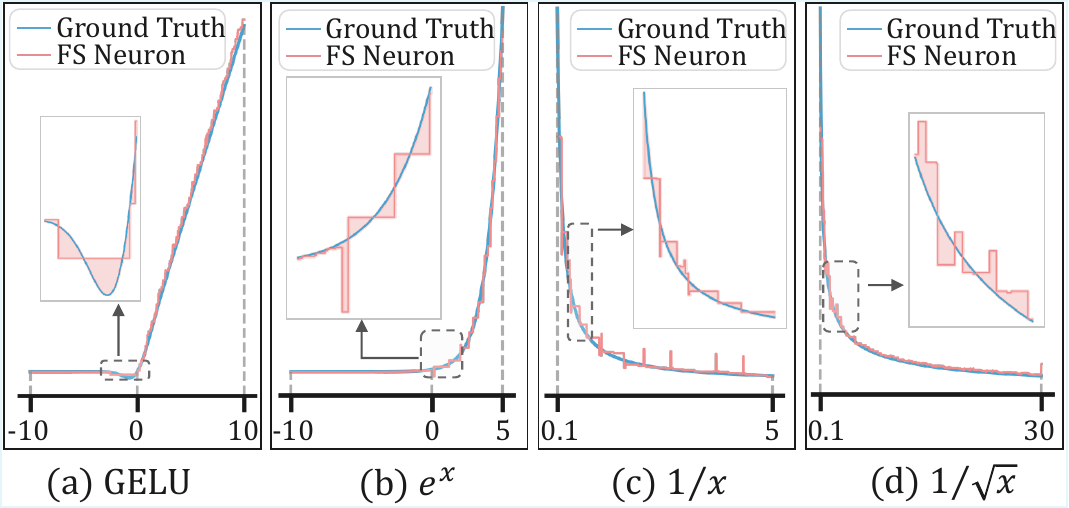}
    \caption{Critical interval approximation failure of nonlinear operations in Transformer using FS neurons.}
    \label{fig:fitting error in Transformer}
\end{figure}

\paragraph{\textbf{Global Suboptimality}} 

GSO stems from the parametric gap term \( \frac{\mathcal{L}_f |y|_{\max}}{T} \) in Theorem~\ref{theorem 1}, revealing FS neurons' limitations when approximating functions with non-uniform complexity.
Transformer nonlinearity inputs concentrate near zero (Statistical analysis in Appendix~C), where activation functions such as GELU and SiLU have high curvature and large local Lipschitz constants \( \mathcal{L}_f^{\text{local}} \gg \mathcal{L}_f \). This creates an amplified local gap \( \varepsilon_{\text{param}}^{\text{local}} = \mathcal{O}(\mathcal{L}_f^{\text{local}} |y|_{\max}^{local} / T) \), 
which dominates the approximation error and highlights the inadequacy of uniform time allocation.
Experimental results support this analysis: as shown in Fig.~\ref{fig:fitting error in Transformer}, although the overall approximation error is small, the fitting degrades significantly in the near-zero region where outputs vary most rapidly. This region is crucial for Transformer performance, and the poor local approximation dominates overall behavior (FS-based Transformer conversion results in Appendix~D). 

\section{Method}
In this section, we first propose the MBE neuron, which employs exponential decay parameter update strategy and multi-basis encoding method to map activation values at multiple resolutions. Based on the MBE neuron, we further design an A2S conversion framework to overcome key challenges for Transformer  architectures, achieving near-lossless conversion that is training-free and low-latency.

\subsection{Multi-Basis Exponential Decay Neuron}
To address the limitations of FS neurons, we propose a novel MBE neuron with an exponential decay parameter update strategy and multi-basis encoding scheme. 
Unlike FS neurons requiring learning multiple parameters $(d[t], r[t], V_{\mathrm{th}}[t])$ per timestep, MBE neurons only learn decay rates and discrete timesteps, reducing parameter overhead and gradient instability (Details in Appendix~E).

\begin{figure}[h]
  \centering
  \includegraphics[width=1\linewidth]{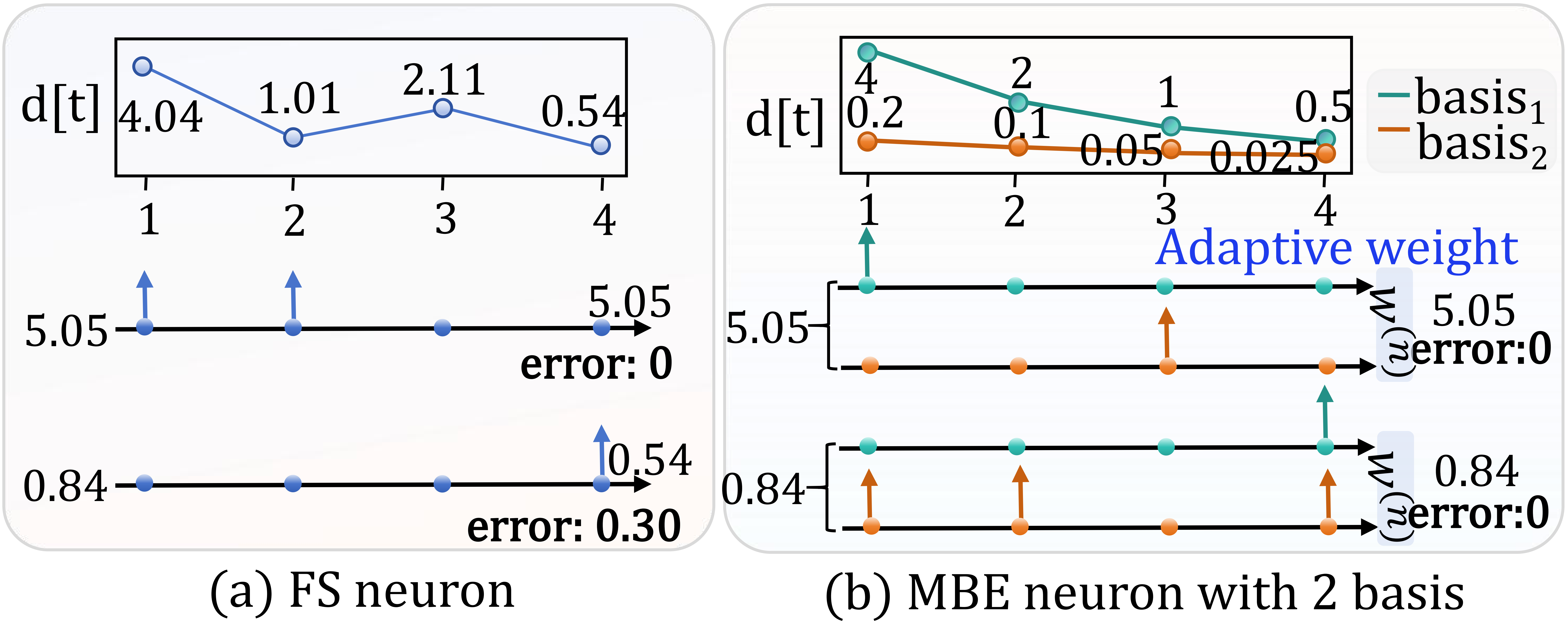} 
  \caption{FS neuron and MBE neuron encoding methods.}
  \label{fig:encoding}
\end{figure}

The exponential decay mechanism enables progressive refinement from coarse to fine resolution, allowing adaptive approximation of functions with varying granularity and rapid focus on limited value ranges over shorter timesteps, as illustrated in Fig.~\ref{fig:encoding}. The parameter update is defined as:
\begin{align}
Para(\tau_n, t) = \alpha \cdot exp\left(-\frac{t \Delta t}{\tau_n}\right), 
\end{align}
where $Para(\tau_n, t)$ represents the parameter value at time $t$, $\alpha$ is a hyperparameter typically set to the target function's maximum value, $\Delta t$ is the discrete timestep, and $\tau_n \in \{\tau_{d_n}, \tau_{r_n}, \tau_{V_{{th}_{n}}}\}$ are the corresponding decay rates.

\begin{figure}[t]
  \centering
  \includegraphics[width=\linewidth]{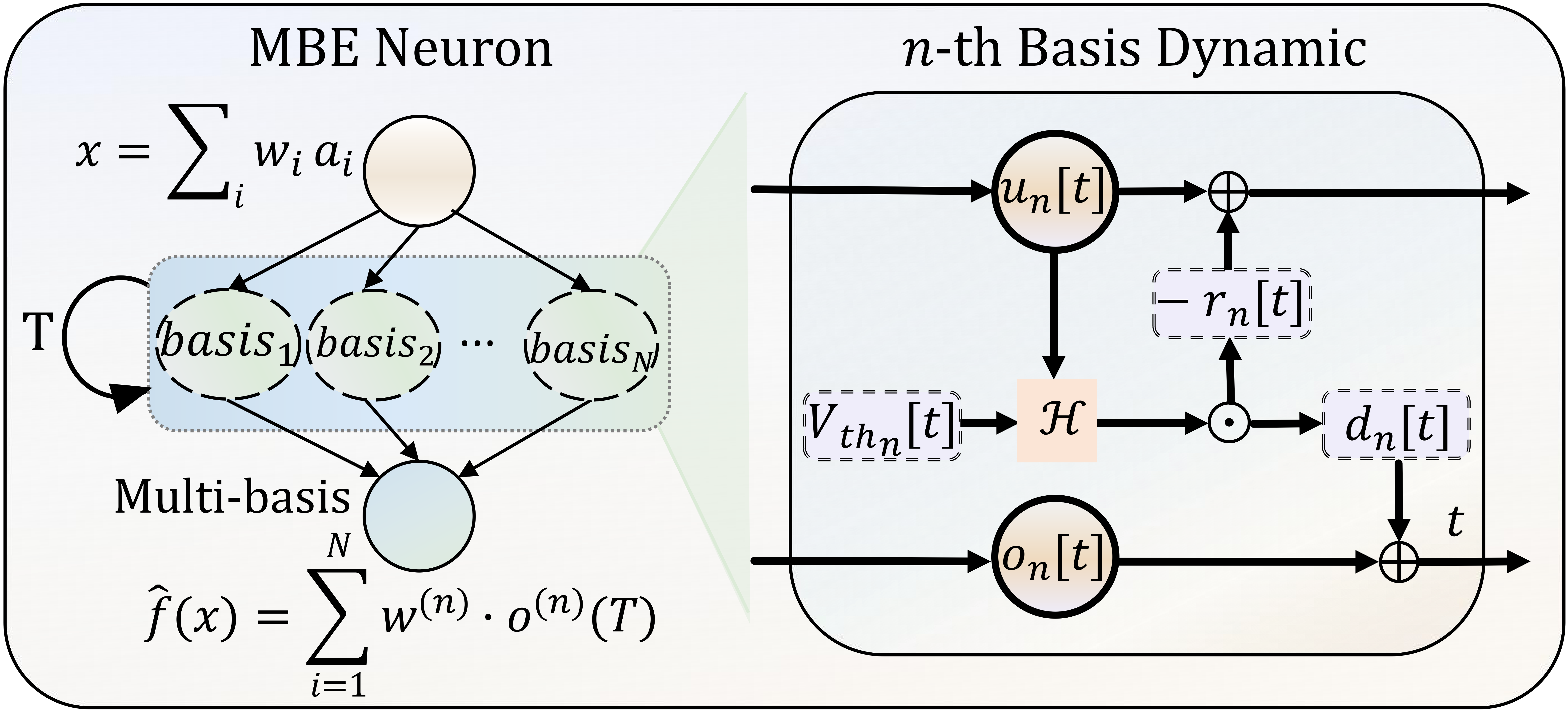}
  \caption{Multi-basis exponential decay neuron.}
  \label{fig:MBEN}
\end{figure}

To further enhance the representational capacity for approximating diverse nonlinear functions, we introduce multi-basis encoding as the second core feature. 
As shown in Fig.\ref{fig:MBEN}, each MBE neuron comprises $n$ basis components contributing distinct functional components to the overall response. The dynamics of the $n$-th basis are defined as:
\begin{align}
    u_n[t+1] &= u_n[t] - s_n[t] \cdot r_n[t],\\
    s_n[t] &=  \mathcal{H}\big(u_n[t] - {V_{th}}_n[t]\big),\\
    o_n[t+1] &= o_n[t] + s_n[t] \cdot d_n[t], 
\end{align}
where $u_n[t]$ is the membrane potential of basis $n$ at time $t$, $o_n[t]$ is the accumulated output, $d_n[t]$ denotes the spike intensity, $r_n[t]$ is the reset value, ${V_{th}}_n[t]$ is the firing threshold, and $\mathcal{H}(\cdot)$ is the Heaviside step function. When $u_n[t] \geq {V_{th}}_n[t]$, the neuron fires and emits a spike with intensity $d_n[t]$ while reducing the membrane potential by $r_n[t]$. 

Finally, all basis outputs are weighted by a \( w \) to form \( \hat{f}(x) \), passed to the next layer to approximate \( f(x) \):
\begin{equation}
\label{appro}
\hat{f}(x) = \sum\nolimits_{n=1}^{N} w^{(n)} \cdot \text{o}^{(n)}(T) \approx f(x).
\end{equation}

\paragraph{\textbf{Approximation Error Analysis}} 
       Following the same analytical framework in Theorem~\ref{theorem 1}, we derive error bounds for MBE neurons, accounting for the representational capacity:
       % through multi-basis encoding and exponential decay mechanism:

\begin{theorem}[MBE Neuron Error Bounds]
For a target activation function $f: [a,b] \rightarrow \mathbb{R}$ and MBE neuron output $\hat{f}_{N,T}^{(M)}(x)$ with $N$ basis components, $T$ timesteps trained on $M$ samples, the total approximation error satisfies:
\begin{equation*}
\varepsilon^* \!\leq \!O\Biggl(
    \underbrace{
        \sqrt{
            \frac{ N \log (N) \log M }{ M }
        }
    }_{\text{Empirical Gap}} 
   \! +\!\!
    \underbrace{
        \frac{ \mathcal{L}_f |y|_{max} }{ NT }
    }_{\text{Parametric Gap}}\!\! +\!
    \underbrace{
         \frac{ \| w \|_1 |\alpha| \tau_{max} }{ T \Delta t }
    }_{\text{Quantization Gap}}
\Biggr),
\label{theorem 2}
\end{equation*}
where $\varepsilon^* = \mathbb{E}[|f(x) - \hat{f}_{N,T}^{(M)}(x)|]$, $\mathcal{L}_f$ is the Lipschitz constant of $f$, $|y|_{max}$ is the maximum absolute value of the target function, $\| w \|_1$ is the $L_1$ norm of basis weights, $|\alpha|$ is the positive scaling parameter, $\tau_{max} = \max_n \{\tau_{d_n}, \tau_{r_n}, \tau_{V_{th_n}}\}$ is the maximum time constant, and $\Delta t$ is the discrete timestep. The proof is provided in Appendix B. 
\end{theorem}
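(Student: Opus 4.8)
The plan is to mirror the three-part decomposition used for Theorem~\ref{theorem 1}: split the expected error by the triangle inequality into an \emph{estimation} term (empirical-risk minimization over the MBE parameter family), a \emph{quantization} term (idealized continuous basis outputs versus the realized finite weighted spike train), and an \emph{approximation} term (best MBE neuron in the family versus the true $f$). Concretely, introduce the population-optimal MBE neuron $\hat f^{\star}_{N,T}$ over the decay rates $\{\tau_{d_n},\tau_{r_n},\tau_{V_{th_n}}\}$ and discrete spike times, together with its idealized continuous-output counterpart $\tilde f^{\star}_{N}$ in which each basis's output is represented exactly rather than by a spike accumulation; then
\[
\varepsilon^{*}\le \mathbb{E}\big|\hat f^{(M)}_{N,T}-\hat f^{\star}_{N,T}\big| \;+\; \mathbb{E}\big|\hat f^{\star}_{N,T}-\tilde f^{\star}_{N}\big| \;+\; \mathbb{E}\big|\tilde f^{\star}_{N}-f\big|,
\]
and bound the three summands against the Empirical, Quantization, and Parametric gaps respectively.

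For the \textbf{Empirical Gap}, I would observe that an MBE neuron is determined by $O(N)$ real parameters (three decay rates and a discrete spike time per basis, plus the $N$ mixing weights $w^{(n)}$), and that its output is a bounded, piecewise-monotone function of these parameters whose variation is controlled by $|\alpha|$ and $\tau_{max}$. Hence the $\delta$-covering number of the induced hypothesis class is $(C/\delta)^{O(N)}$. A standard symmetrization plus Dudley chaining argument — identical in form to the one invoked for Theorem~\ref{theorem 1} — then yields a uniform generalization deviation of order $\sqrt{N\log N\,\log M/M}$, with the $\log M$ coming from truncating the entropy integral near scale $1/M$ and the $\log N$ from the metric entropy of the decay-rate grid.

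For the \textbf{Parametric Gap}, I would give a constructive coarse-to-fine argument exploiting exponential decay: partition the output range $[-|y|_{max},|y|_{max}]$ into $N$ equal sub-bands and let basis $n$ resolve its own sub-band of width $|y|_{max}/N$ to accuracy $|y|_{max}/(NT)$, since $T$ exponentially spaced intensities tile a range of a given size to relative accuracy $O(1/T)$; an alternative is a residual/hierarchical assignment in which basis $n$ approximates the residual left by bases $1,\dots,n-1$, whose range has contracted geometrically. Either way the $N$ bases jointly reach output resolution $\Theta(|y|_{max}/(NT))$, and Lipschitz continuity of $f$ converts this into a pointwise — hence expected — error of order $\mathcal{L}_f|y|_{max}/(NT)$.

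For the \textbf{Quantization Gap}, the mechanism is as in the FS case: replacing the idealized continuous per-basis output by the realized weighted spike accumulation incurs a rounding error at most the $L_1$ mass of the (weighted) spike-intensity sequence divided by $T$. With $d_n[t]=\alpha\exp(-t\Delta t/\tau_{d_n})$, summing the geometric series gives $\sum_{t=0}^{T-1} d_n[t]\le \alpha/(1-e^{-\Delta t/\tau_{d_n}})\le \alpha\tau_{d_n}/\Delta t$, and aggregating over bases with weights $w^{(n)}$ yields $\sum_n |w^{(n)}|\,|\alpha|\,\tau_{d_n}/(\Delta t\,T)\le \|w\|_1|\alpha|\tau_{max}/(T\Delta t)$; adding the three contributions gives the stated bound. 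I expect the \emph{approximation} step to be the main obstacle: unlike the FS neuron, the MBE per-basis quantities $d_n,r_n,V_{th_n}$ are each slaved to a single decay constant, so the coarse-to-fine tiling must be realized \emph{within} this constrained parameterization, and the passage from the continuous exponential to its $\Delta t$-discretization must be tracked carefully so that the improved resolution $\Theta(1/(NT))$ — rather than merely $\Theta(1/T)$ — genuinely holds.
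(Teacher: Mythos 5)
Your three-gap decomposition is exactly the paper's (Appendix~B), and your quantization argument coincides with it line for line: per-basis error bounded by the $L_1$ mass of the intensities over $T$, the geometric series $\sum_{t=0}^{T-1}\alpha e^{-t\Delta t/\tau_{d_n}}\le \alpha\tau_{d_n}/\Delta t$, then aggregation over the weights $w^{(n)}$ to get $\|w\|_1|\alpha|\tau_{max}/(T\Delta t)$. The two places you deviate are minor and, if anything, to your credit. For the Empirical Gap the paper does not use covering numbers and Dudley chaining; it counts $5N$ learnable parameters per neuron, invokes the Bartlett et al.\ VC-dimension lemma with $W=5N$, $L=1$ to get $d_{VC}=\mathcal{O}(N\log N)$, and plugs this into the standard generalization bound — your metric-entropy route lands on the same $\sqrt{N\log N\log M/M}$ and is an acceptable substitute, though you should note that Theorem~1's proof (which you cite as your template) is actually the VC argument, not chaining. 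For the Parametric Gap the paper is weaker than you are: it simply appeals to the piecewise-constant approximation lemma and asserts that $N$ independently optimized bases "effectively increase the approximation resolution," yielding $\mathcal{O}(\mathcal{L}_f|y|_{\max}/(NT))$ without any explicit construction. Your sub-band/residual tiling sketch is more constructive, and the difficulty you flag — that $d_n$, $r_n$, $V_{th_n}$ are each slaved to a single decay constant, so the tiling must be realized inside this constrained exponential family — is a genuine gap, but it is a gap in the paper's own proof as well, not something your route introduces; completing that step rigorously would strengthen the theorem beyond what Appendix~B actually establishes.
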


The theoretical analysis shows MBE advantages and guides our conversion framework through three key insights:

\textbf{Basis component design:} The Parametric Gap $\mathcal{O}(1/NT)$ outperforms FS neurons' $\mathcal{O}(\mathcal{L}_f |y|_{\max}/T)$ by enabling multi-basis encoding to adaptively focus on high-curvature regions, solving the GSO problem without extra timesteps.

\textbf{Initialization stability:} The Quantization Gap $\frac{ \| w \|_1 |\alpha| \tau_{max} }{ T \Delta t }$ mitigates the EDI problem through controlled scaling via $|\alpha|$ and time constant effects via $\tau_{max}/\Delta t$, enabling initialization-stability parameter updates.

\textbf{Hyperparameter determination:} The $1/NT$ scaling reveals a principled trade-off: increasing basis components $N$ can effectively  compensate for limited timesteps $T$, thereby achieving a balance between accuracy and low latency.

\begin{figure*}[t]
  \centering
  \includegraphics[width=0.98\linewidth]{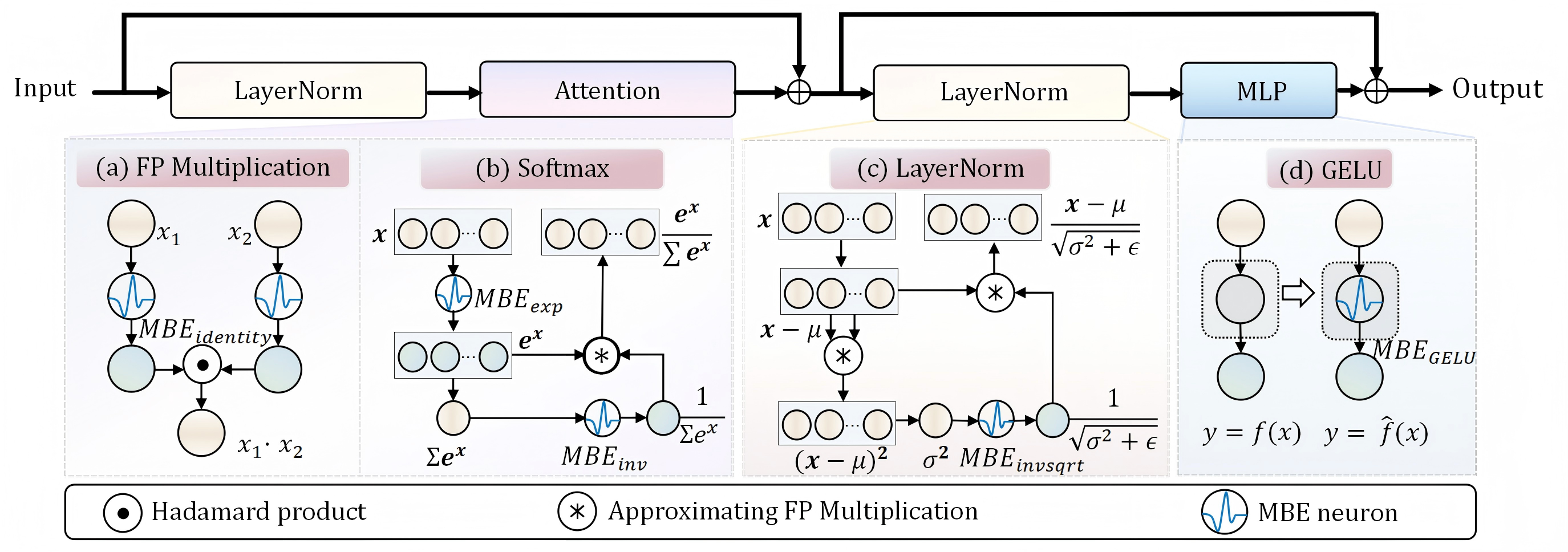}
  \caption{Overview of our framework, which depicts the approximations of FP multiplication, Softmax, LayerNorm, and GELU.}
  \label{fig:FFSC Framework}
\end{figure*}
\subsection{Conversion Framework}

Based on MBE neurons, we propose a framework for Transformer-to-SNN conversion. As shown in Fig.~\ref{fig:FFSC Framework}, by decomposing non-spiking components  (nonlinear activations, FP multiplication, Softmax, LayerNorm) 
into basic functions and designing corresponding MBE neurons for equivalent approximation, we enable spiking-based representation.

\paragraph{\textbf{Approximation of Nonlinear Activation Functions}} 
To enable accurate and efficient spike-based approximation of nonlinear activation functions such as GELU and Tanh in Transformers, we employ MBE neurons with \( N = 4 \) basis components (in Eq.(\ref{appro})). Since LayerNorm compresses activations into a narrow range, we constrain the GELU input domain to \( (-120, 10) \) to enhance both approximation accuracy and training stability. Within this interval, we uniformly sample \( M = 10{,}000 \) points from the target function \( f(x) \) and compute the spike-based output \( \hat{f}(x) \) over \( T \) timesteps.

\paragraph{\textbf{Approximation of Floating-Point Multiplication}} 
\label{spiking FP_MUL}
The FP multiplication operands $x_1$ and $x_2$ are transformed through approximate identity mapping by the MBE neuron $MBE_{Id}$, yielding the spike-train representations:
\begin{equation}
    x \approx MBE_{Id}(x) = \sum_{t=0}^{T-1}d[t]s[t], \quad x \in \{x_1, x_2\}.
\end{equation}
Then, the multiplication of $x_1$ and $x_2$ is expressed as:
\begin{align}
\label{eq:mul1}
    x_1 \cdot x_2 &\approx\sum_{i=0}^{T-1} \sum_{j=0}^{T-1} d[i] d'[j] \cdot s[i] s'[j].
\end{align}
We represent the spike sequences of two MBE neurons as vectors $\mathbf{s}=\{s[t]\}_{t=0}^{T-1}$ and $\mathbf{s'}=\{s'[t]\}_{t=0}^{T-1}$ respectively, and denote their corresponding intensity sequences as vectors $\mathbf{d}=\{d[t]\}_{t=0}^{T-1}$ and $\mathbf{d'}=\{d'[t]\}_{t=0}^{T-1}$. Based on these temporal vectors, we construct the intensity matrix  $\mathbf{D} \in \mathbb{R}^{T \times T}$ and the spike matrix $\mathbf{S} \in \{0,1\}^{T \times T}$ as follows:
\begin{align}
\mathbf{D} &=  \mathbf{d}^\top \mathbf{d'}, \qquad \mathbf{S} = \mathbf{s}^\top \mathbf{s'}.
\end{align}
% where $\mathbf{D}$ is the intensity matrix and $\mathbf{S}$ is the spike matrix. 
To achieve multiplication in the form of spikes, we employ the Hadamard product followed by summation. Consequently, Eq.(\ref{eq:mul1}) can be expressed as:
\begin{align}
    x_1 \cdot x_2 \approx \sum_{i=0}^{T-1}\sum_{j=0}^{T-1}\mathbf{D}_{ij}\cdot \mathbf{S}_{ij}
    =\sum_{i=0}^{T-1}\sum_{j=0}^{T-1}(\mathbf{D}\odot \mathbf{S})_{ij},
\end{align}
 where  $\mathbf{D}_{ij}$ = $d[i]d'[j]$ and $\mathbf{S}_{ij}$ = $s[i]s'[j]$. The intensity matrix $\mathbf{D}$ can be precomputed and shared across the entire network. With the binary matrix $\mathbf{S}$, the entire FP multiplication approximation process maintains its spike-driven characteristics (Details in Appendix.~F.1).
% Since spike intensity of MBE neuron is fixed, the intensity matrix can be precomputed and stored. 
% The FP multiplications in self-attention matrix operations can be replaced by our spike-driven multiplication, enabling spike-based matrix computation in self-attention.
By replacing FP multiplications in self-attention with our spike-driven operations, we enable fully spike-based attention matrix computation.

\paragraph{\textbf{Approximation of Softmax}} 
The $\text{Softmax}(x_i) = \frac{e^{x_i}}{\sum_j e^{x_j}}$, where $i$ ,$j$ denote the indices of elements in the sequence. 
Since softmax depends on all inputs, it cannot be directly implemented by a single MBE neuron. As shown in Fig.~\ref{fig:FFSC Framework}(b), we decompose the softmax computation into three components: exponential ($e^x$), reciprocal ($1/x$), and FP multiplication, each approximated by MBE neurons.
For the term $e^{x_{i}}$,  we apply the change-of-base formula to decompose it into integer and decimal components as:
\begin{equation}
\label{equation:MBE_exp}
    e^{x_i} = 2^{x_i\cdot\log_2e}=2^{\lfloor {x_i}\cdot\log_2e\rfloor} \cdot 2^{x_i\cdot\log_2e - \lfloor {x_i}\cdot\log_2e\rfloor},
\end{equation}
where $\lfloor{\cdot}\rfloor$ denotes the floor operation. 
The fractional part $2^{x_i\cdot\log_2e - \lfloor {x_i}\cdot\log_2e\rfloor}$ is approximated by MBE neurons, and the integer exponent $2^{\lfloor x_i \cdot \log_2 e \rfloor}$ is achieved by hardware-efficient addition \cite{li2023vit}.
For the reciprocal term \( 1 / \sum_j e^{x_j} \), we follow the IEEE 754 standard~\cite{kahan1996ieee} to extract the exponent \( E \) and mantissa \( M \) through the corresponding bits of $\sum_j e^{x_j}$ (i.e., $\sum_j e^{x_j}=M\cdot2^E$). The reciprocal \( 1/M \) is directly approximated using an MBE neuron, and the final result is obtained as \( 1 / \sum_j e^{x_j} = 2^{-E} / M \).

Combining the approximations of \( e^{x_i} \) and \( 1 / \sum_j e^{x_j} \), together with our FP multiplication approximation scheme, we compute the final softmax output in a spike-based manner (Detailed Softmax algorithm table in Appendix~F.2).

\paragraph{\textbf{Approximation of LayerNorm}} The 
$\text{LN}(x_i)=\gamma\cdot\frac{x_i-\mu}{\sqrt{\sum(x_i-\mu)^2/n+\epsilon}}+ \beta$,
where $\mu$ is the input mean, $n$ is the number of elements, $\epsilon$ is a stability constant, and $\gamma, \beta$ are learnable parameters.
As shown in Fig.~\ref{fig:FFSC Framework}(c), we also decompose LayerNorm into three spike-unfriendly operations: squaring ($x^2$), reciprocal square root ($1/\sqrt{x}$), and FP multiplication, each approximated by MBE neurons.

For the squared term $\sum (x_i-\mu)^2$, we adopt the proposed FP multiplication approximation with the intensity matrix $\mathbf{D}$ pre-scaled by $1/n$ to directly yield $(x_i - \mu)^2 / n$.

For the inverse square root $1/\sqrt{\sum (x_i-\mu)^2/n+\epsilon}$, we decompose the variance term into exponent $E$ and mantissa $M$. We adjust $E$ and $M$ based on $E$'s parity to ensure $-E/2$ is an integer. The term $1/\sqrt{M}$ is approximated directly using MBE neuron, and the inverse square root is obtained by multiplying with $2^{-E/2}$, similar to the Softmax approximation.
Finally, we multiply $(x_i-\mu)$ with the approximated $1/\sqrt{\sum(x_i-\mu)^2/n+\epsilon}$ using our FP multiplication approximation to obtain the LayerNorm output.

Based on the aforementioned conversion of all non-spike-friendly components into equivalent spiking forms, we replace the corresponding modules to construct the Transformer-based SNN without modifying the original network parameters. Detailed pseudocode is in Appendix F.3.

\begin{table*}[htbp]
\setlength{\tabcolsep}{4.0pt}
\centering
\begin{tabularx}{\textwidth}{llccccccc}
\toprule
\multicolumn{1}{c}{Category} & Method & Arch. & SD & TF & Param[M] & Timesteps & ANN &Acc.[\%]($\Delta$) \\
\midrule
\multirow{8}{*}{DT} 

&DSR\cite{meng2022training}& ResNet-18 & {\dingcheck}& - & 12 & 50 & - & 67.74\\
& TET\cite{deng2022temporal} & SEW-ResNet-34 &{\dingcross} & - & 22 & 4 & - &68.00\\
& AT-SNN\cite{yao2023attention} & ResNet-104 & {\dingcross} & - & 45 & 4 & - & 77.08 \\
\cmidrule{2-9} 
& Spikingformer\cite{zhou2023spikingformer} & -4-384-400E &{\dingcheck} & - & 66 & 4 & - &75.85 \\
& SDTv1\cite{yao2023spikedriven} & -8-768 & {\dingcheck} & - & 66 & 4 & - &77.07 \\
& Spikeformer\cite{li2022spikeformer} & -7L/3×2×4 &{\dingcross} & - & 38 & 4 & - &78.31 \\
& QKFormer\cite{zhou2024qkformer} & HST-10-768 & {\dingcheck} & - & 65 & 4 & - & 84.22 \\
& SDTv3\cite{yao2025scaling} & E-SpikeFormer & {\dingcheck} & - & 173 & 8 & - & 85.10\\
\cmidrule{1-9} 
\multirow{14}{*}{A2S} 

& QCFS \cite{bu2023optimal} & VGG-16 & {\dingcheck} & {\dingcross} & 138 & 64 & 74.92 &72.85(-2.07)\\
&QFFS\cite{li2022quantization} &VGG-16 &{\dingcheck} & {\dingcross} & 138 & 8 &73.08 &73.10(+0.02)\\
& SNM\cite{wang2022signed} & ResNet-18 & {\dingcheck} & {\dingcheck} & 12 & 64 & 73.18 & 71.50(-1.68)\\
& QCFS\cite{bu2023optimal} & ResNet-34 & {\dingcheck} & {\dingcross} & 22 & 64 & 74.23 & 72.35(-1.88)\\
& ECL\cite{liu2025efficient} & ResNet-34 & {\dingcheck} & {\dingcross} & 22 & 16 & 74.36 & 72.37(-1.99)\\
& AdaFire\cite{wang2025adaptive} & ResNet-34 & {\dingcheck} & {\dingcheck} & 22 & 8 & 75.66 & 72.96(-2.70)\\
% \cmidrule(r){2-9} 
& \multirow{2}{*}{\textbf{Ours}} & VGG16 & {\dingcheck} & {\dingcheck} & 138 & 10 & 73.37 & 72.61 {(-0.76)} \\
& & ResNet-34 & {\dingcheck} &{\dingcheck} & 22 & 10 & 76.31 & \textbf{75.57(-0.74)} \\
\cmidrule{2-9}
& ECMT\cite{huang2024towardsecmt} & ViT-L/16 & {\dingcross} &{\dingcheck} & 307 & 12 & 84.88&84.71(-0.17) \\
& STA\cite{jiang2024spatio} & ViT-B/32 & {\dingcross} &{\dingcheck} & 88 & 256 & 83.60&82.79(-0.81) \\
& SpikeZIP-TF\cite{you2024spikezip} & SViT-L-32Level & {\dingcheck} & {\dingcross} & 304 & 64 &85.41& 83.82(-1.59) \\
& SpikedAttention\cite{hwang2024spikedattention} & Swin-T(BN) & {\dingcross} & {\dingcheck} & 28 & 48 &79.30& 77.20(-2.10) \\ 
% & DCGS\cite{huang2025differential} & ViT-L-8/4 & {\dingcross} & {\dingcheck} & 304 & 8 & 85.84& 84.76(-1.08) \\
% \cmidrule(r){2-9}

& \multirow{2}{*}{\textbf{Ours}} & ViT-B/16 & {\dingcheck} & {\dingcheck} & 86 & 16 & 83.44&83.00{(-0.44)} \\

& & ViT-M/16 & {\dingcheck} &{\dingcheck} & 64 & 16 &85.95& \textbf{85.31(-0.64)} \\
\bottomrule
\end{tabularx}
\caption{ImageNet performance comparison. Here, ``SD'' and ``TF'' represent Spike-Driven and Training-Free.}
\label{tab:imagenet_comprarison}
\end{table*}

\section{Experiments}

\subsection{Experimental Setup}
We conduct experiments on a GPU (RTX 4090) environment using the PyTorch framework, evaluating both pre-trained Vision Transformer (ViT) models, including ViT-Base-Patch16 (ViT-B/16) \cite{dosovitskiy2020image} and ViT-Medium-Patch16-Reg4-Gap-256 (ViT-M/16) \cite{darcet2023vision}, as well as CNN models, including VGG16 \cite{simonyan2014very} and ResNet34 \cite{he2016deep} on the ImageNet dataset \cite{deng2009imagenet}. To validate the generalizability of our method across different language domains and tasks, we adapt RoBERTa \cite{liu2019roberta, zhang2025spike} for natural language understanding (NLU) tasks and employ GPT-2 \cite{radford2019language} for natural language generation (NLG) task evaluation (Experimental and implementation details in Appendix~G.1).

\subsection{Comparative Study}

\paragraph{\textbf{Comparison on CV}} 
We evaluate the performance of our conversion framework by applying it to convert both ViT and CNN architectures. 
As shown in Tab.~\ref{tab:imagenet_comprarison}, ViT-B/16 and ViT-M/16 achieve conversion losses of 0.44\% and 0.64\% respectively—significantly lower than most existing methods. The framework attains 85.31\% accuracy for ViT and 75.57\% for CNN, outperforming other  SNNs. While ECMT achieves competitive accuracy at shorter timesteps, it retains FP multiplication in its expectation compensation module. In contrast, our A2S framework maintains crucial spike-based property, thereby avoiding the high energy consumption typically associated FP multiplication during inference. Furthermore, the training-free nature of our approach enables conversion with minimal computational overhead, eliminating the need for extensive retraining for source ANNs.
Our method successfully accomplishes A2S conversion without requiring additional training of the original network, preserves the essential spiking properties, delivers superior performance at short timesteps, and achieves optimal results across all evaluation dimensions.

\paragraph{\textbf{Comparison on NLU}}
To evaluate our method on NLU tasks, we conduct experiments on four benchmark datasets including SST-2, SST-5, MR, and Subj, following standard experimental settings from studies~\cite{you2024spikezip, zhu2023spikegpt}. As shown in Table~\ref{tab:nlu}, our method achieves competitive performance across all datasets, outperforming existing methods on both RoBERTa-Base (125M) and RoBERTa-Large (355M). On SST-2, it attains 95.98\% accuracy with $T=16$, representing only 0.24\% degradation from the original ANN (96.22\%), while outperforming SpikeZIP-TF~\cite{you2024spikezip} by 2.19\% and reducing timesteps by 87.5\%. Similarly, on MR with RoBERTa-Base, our method achieves 89.00\% accuracy at $T=16$, exceeding SpikeZIP-TF by 2.87\% with 75\% fewer timesteps.

\begin{table}[h]
  \setlength{\tabcolsep}{1.1pt} % 缩短列间距
  \begin{tabular}{llcccccc}
    \toprule
    Category & Model & Param & SST-2 & SST-5 & MR & Subj & T \\
    \midrule
    \multirow{2}{*}{ANN} 
        & \multirow{2}{*}{\centering Roberta} 
            & 125 & 94.49 & 55.46 & 89.39 & 96.45 & 1 \\
        & & 355 & 96.22 & 59.37 & 91.36 & 97.50 & 1 \\
    \midrule
    \multirow{3}{*}{\shortstack[l]{DT}} 
      & Spikeformer & 110 & 81.55 & 42.02 & 79.38 & 91.80 & 4 \\
      & SpikeBERT & 109 & 85.39 & 46.11 & 80.69 & 93.00 & 4 \\
      & SpikeGPT & 45 & 80.39 & 37.69 & 69.23 & 88.45 & 50 \\
    \midrule
    \multirow{4}{*}{A2S} 
      & \multirow{2}{*}{\centering SpikeZIP-TF} 
        & 125 & 92.81 & 52.71 & 86.13 & 95.55 & 64 \\
      & & 355 & 93.79 & 56.51 & 89.28 & 96.70 & 128 \\
      \cmidrule{2-8}
      & \multirow{2}{*}{\centering \makecell[l]{Ours}} 
        & 125 & 93.46 & 55.11 & 89.00 & 96.30 & 16 \\
      & & 355 & \textbf{95.98} & \textbf{58.31} & \textbf{90.96} & \textbf{97.45} & 16 \\
    \bottomrule
  \end{tabular}
  \caption{NLU Performance Comparison}
    \label{tab:nlu}
\end{table}

\begin{figure*}[htbp]
  \centering
  \includegraphics[height=5cm, width=\linewidth]{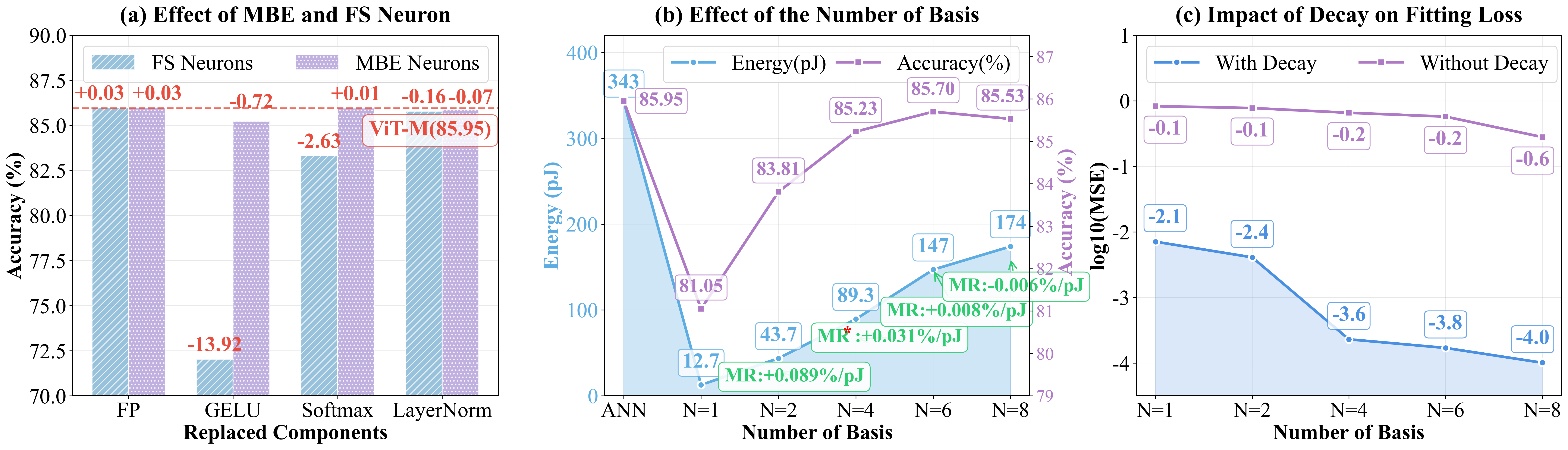}
  \caption{(a) Effect of MBE and FS neuron on components in Transformer,  FP in (a) means floating multiplication. (b) Impact of MBE neurons with varying basis counts on energy consumption and accuracy when replacing GELU. MR represents marginal rate of change between accuracy and energy consumption. (c) Effect of the basis and decay. Loss comparison between models with (blue) and without (red) decay mechanism across basis numbers N=1,2,4,6,8. Y-axis shows log MSE loss (lower is better).
}
  \label{fig:Basis_N}
\end{figure*}

\paragraph{\textbf{Comparison on NLG}}
To demonstrate the applicability of our method to NLG tasks, we evaluate its performance on WikiText-2 and WikiText-103. As shown in Tab.~\ref{tab:nlg}, our method achieves 22.69 perplexity on WikiText-2 with only T=16 timesteps, incurring merely 0.35\% conversion loss compared to GPT-2. More significantly, on WikiText-103, it attains 23.41 perplexity—a substantial 41.1\% improvement over directly-trained SpikeGPT (39.75) while using dramatically fewer timesteps (T=16 vs. T=1024), closely approaching the original GPT-2 performance and demonstrating superior efficiency for long-context generation.

\begin{table}[h]
  \setlength{\tabcolsep}{1.6pt} 
  \begin{tabular}{llcccc}
    \toprule
    Category & Model & Param & Wiki-2 ({↓})  & Wiki-103 ({↓})  & T \\
    \midrule
    \multirow{1}{*}{ANN} 
      & GPT-2 & 346 & 22.34 & 22.65 & 1 \\
    \midrule
    \multirow{1}{*}{DT} 
      & SpikeGPT &216  & \textbf{18.01} & 39.75 & 1024 \\
    \midrule
    A2S & \makecell[l]{Ours}  & 346 &\makecell[c]{22.69 \\ {(+0.35)} } &\makecell[c]{\textbf{23.41} \\ {(+0.76)}} & 16 \\
    \bottomrule
  \end{tabular}
\caption{NLG Performance Comparison. Both Wiki-2 and Wiki-103 use perplexity, {↓} indicates that lower is better.}
  \label{tab:nlg}
\end{table}

\subsection{Ablation Study}

To compare MBE neurons and FS neurons, we evaluate their performance on key Transformer components. For GELU and FP multiplication, direct replacement is employed. As shown in Fig.~\ref{fig:Basis_N}(a), MBE neurons surpass FS neurons by $13.20\%$ in GELU conversion and achieve improvements in Softmax and LayerNorm, demonstrating superior approximation ability. In Fig.~\ref{fig:Basis_N}(b), increasing $N$ leads to more accurate function approximations by MBE neurons. MBE neurons achieve high performance (e.g., less than 1\% conversion loss when N $\geq 4$) while offering significantly higher energy efficiency than ANNs. N=4 achieves the optimal accuracy-MR trade-off, representing the knee point where marginal gains plateau.
The synergistic effect between exponential decay and multi-basis encoding is demonstrated in Fig.~\ref{fig:Basis_N}(c). Incorporating decay consistently reduces MSE by $1$--$2$ orders of magnitude across all basis numbers, with significant performance improvements. This reveals that exponential decay creates a multiplicative synergy with multi-basis encoding, enabling exceptional approximation precision unattainable through basis expansion alone.

\begin{table}[h]
\centering
\setlength{\tabcolsep}{2.0pt} % 缩短列间距
\begin{tabular}{l l c c c c c}
\toprule
\multirow{2}{*}{\centering Dataset} & 
\multirow{2}{*}{\centering Model} & 
\multirow{2}{*}{\centering ANN} & 
\multicolumn{4}{c}{T} \\
\cmidrule(lr){4-7}
 & & & 8 & 10 & 12 & 16 \\
\midrule
\multirow{2}{*}{ImageNet} 
    & ViT-B/16 & 83.44 & 0.12 & 79.96 & 82.79 & 83.00 \\
    & ViT-M/16 & 85.95 & 1.17 & 21.99 & 84.43 & 85.31 \\
\midrule
\multirow{2}{*}{MR} 
    & Roberta-B & 89.39 & 50.17  & 71.30  & 88.55  & 89.00 \\
    & Roberta-L & 91.36 & 49.66 & 68.62 & 90.52 & 90.96 \\
\midrule
Wiki-103 ({↓}) 
    & GPT-2 & 22.65 & 41072 & 11992 & 33.56 & 23.41 \\
\bottomrule
\end{tabular}
\caption{Performance across various timesteps and models. Wiki-103 uses perplexity, {↓} indicates that lower is better.}
\label{tab:ffsc_ablation}
\end{table}

We evaluate ImageNet, MR, and Wiki-103 to analyze timestep requirements for A2S conversion across vision and language tasks. As shown in Tab.~\ref{tab:ffsc_ablation}, our method achieves about 1\% conversion loss for ViT, RoBERTa, and GPT-2 at T=16, with near-optimal performance at T=12, demonstrating efficient progressive encoding. Notably, the accuracy of ViT-M/16 drops to 21.99\% at T=10, primarily due to its wider identity mapping range [0,62] causing amplified approximation errors under limited timesteps.

\section{Energy Estimation}
Unlike ANNs where energy depends on floating-point operations (FLOPs), the energy cost of SNNs is dominated by synaptic operations (SOPs). Following~\cite{rathi2020diet}, the energy ratio between SNNs and ANNs  is:
\begin{equation}
\label{eq:nenergy_comsumption}
\frac{E_{\text{SNN}}}{E_{\text{ANN}}} = \frac{\text{SOPs} \cdot E_{\text{AC}}}{\text{FLOPs} \cdot E_{\text{MAC}}},
\end{equation}
where $E_{\text{MAC}}$ = 4.6pJ, $E_{\text{AC}}$ = 0.9pJ. For ViT-M/16, we measure the firing rates $\eta$ of nonlinear operations across all layers. Taking GELU as an example, the ANN implementation requires 70 FLOPs~\cite{jiang2024spatio}, while the MBE neuron achieves $\eta$=38.22\% under N=4 and T=16. 
The synaptic operations achieve energy consumption of 13.7\% according to $T*N*\eta*E_{AC}$.
Other operations yield greater energy savings owing to lower MBE firing rates. Complete results and detailed firing rates are provided in Appendix G.4.

\section{Conclusion}

This paper proposes an efficient Spiking Transformers conversion method that requires no additional training on the source ANNs. 
We first theoretically and experimentally identify two key challenges in using FS neurons to approximate nonlinear operations: excessive dependence on initialization (EDI) and global suboptimality (GSO) problems. To address these issues, the MBE neuron employs an exponential decay strategy and a multi-basis encoding method to more accurately approximate the nonlinear operations in Transformer architecture. Based on the MBE neuron, a general ANN-to-SNN conversion framework is developed, which supports spiking nonlinear activation functions, spiking FP multiplications, spiking Softmax, and spiking LayerNorm. Extensive experiments on various models (CNN, ViT, RoBERTa, GPT-2) across tasks (CV, NLU, NLG) confirm that the proposed method achieves near-lossless conversion. Therefore, our method provides a promising approach for the efficient and scalable deployment of Transformer-based SNNs in real-world applications.

\section{Acknowledgments}
This work is supported in part by the National Natural Science Foundation of China (62220106008 and 62576080), in part by the State Key Laboratory of Brain Cognition and Brain-inspired Intelligence Technology, Grant No. SKLBI-K2025010.

\bibliography{aaai2026}

\end{document}